\newif\ifworkinprogress
  \newcommand{\jw}[1]{\textcolor{blue}{\textbf{[Jiawei] #1}}}
  \newcommand{\pv}[1]{\textcolor{blue}{\small{[Puya] #1}}}
  \newcommand{\mq}[1]{}
  \newcommand{\es}[1]{}
  \newcommand{\mr}[1]{}
  \newcommand{\ty}[1]{}
  \newcommand{\pv}[1]{}
\DeclarePairedDelimiter\ceil{\lceil}{\rceil}
\DeclareMathOperator{\E}{\mathbb{E}}
\newcommand{\normaltilde}{\raise.17ex\hbox{$\scriptstyle\sim$}}
\def\argmin{\mbox{argmin}}
\def\argmax{\mbox{argmax}}
\newcommand{\IGNORE}[1]{}
\newcommand{\CRM}{\textit {CRM}\xspace}
\newcommand{\VALUE}{\textit {VALUE}\xspace}
\newcommand{\ZERO}{\textit {ZERO}\xspace}
\newcommand{\AVG}{\textit {AVG}\xspace}
\newcommand{\REVMAX}{\textit {REV\_MAX}\xspace}
\newcommand{\BR}{\textit {BR}\xspace}
\newcommand{\BRW}{\textit {BR\_w}\xspace}
\newcommand{\PDR}{\textit {PDR}\xspace}
\newcommand{\RECALL}{\textit {RECALL}\xspace}
\newcommand{\REVPOTENT}{\textit {REV\_POTENT}\xspace}
\newcommand{\PDRLP}{\textit {PDR\_LP}\xspace}
\newcommand{\PDRHP}{\textit {PDR\_HP}\xspace}
\newcommand{\commentedtext}[1]{}
\DeclareMathOperator*{\median}{median}
\newtheorem{corollary}{Corollary}
\newtheorem{theorem}{Theorem}
\newtheorem{lemma}{Lemma}
\begin{document}


\title{Revenue Maximization of Airbnb Marketplace using Search Results}

\author{
  Jiawei Wen \\
  Pennsylvania State University\\
  \texttt{jiaweiwen7@gmail.com} \\
   \And
 Puya Vahabi \\
    Airbnb\\
  \texttt{Puya.vahabi@airbnb.com}\\
  \And
   Mihajlo Grbovic \\
   Airbnb \\
   \texttt{mihajlo.grbovic@airbnb.com} \\
}

%
%

\date{}
\maketitle

\begin{abstract}
Correctly pricing products or services in an online marketplace presents a challenging problem and one of the critical factors for the success of the business. When users are looking to buy an item they typically search for it. Query relevance models are used at this stage to retrieve and rank the items on the search page from most relevant to least relevant. The presented items are naturally “competing” against each other for user purchases. 
We provide a practical two-stage model to price this set of retrieved items for which distributions of their values are learned. The initial output of the pricing strategy is a price vector for the top displayed items in one search event. We later aggregate these results over searches to provide the supplier with the optimal price for each item.  
We applied our solution to large-scale search data obtained from Airbnb Experiences marketplace. Offline evaluation results show that our strategy improves upon baseline pricing strategies on key metrics by at least +$20$\% in terms of booking regret and +$55$\% in terms of revenue potential.



\end{abstract}

\keywords{Marketplace Pricing; Optimal Pricing; Revenue Maximization}

\section{Introduction}
\label{sec:introduction}
%



Online marketplace are used every day by millions of users worldwide to find and buy the right product or a service. Examples of online marketplaces include e-commerce sites where customers can buy products and travel sites where customers can book places to stay while they travel. The one important problem which all online marketplaces have in common is how to determine the right price for a product or a service. Depending on the type of marketplace, the price is usually set by the suppliers or the sellers. However, the marketplace typically has additional demand data and signals it could use to determine the right price of a product, i.e. its value. The marketplace can use it to provide services to suppliers such as price recommendations or even automatic price optimization where the price is changed dynamically to maximize suppliers revenue while exploiting the characteristics of the demand. On the other hand, the marketplace also leverages the signals about the inferred product value when surfacing products to buyers who are searching, i.e. in product recommendation and search ranking algorithms. The biggest challenge and the main topic of this paper is how to accurately determine the right price for a product. 



Existing work on determining the best item price are typically considering items independently \cite{ye2018customized}, thus ignoring the multiple choice context user has when searching for an item. In the search setting, the user interacts with an algorithm that ranks the items to be shown to the user and the most relevant and most appealing results are shown first. Consequently, the top items receive substantial buyer attention and hence are a major contributor to purchases made in the marketplace. They are also in constant competition among each other and this needs to be taken into consideration when building the price inference model.



In this paper we propose a practical solution for determining the item prices that maximizes marketplace revenue based on search result data and the historical engagement with items that appeared in search. As already discussed, the search ranking algorithm aims at showing the most relevant items given the query parameters and ranking them in order of purchase probability. The set of shown results varies depending on entered query, keywords, filters used, etc. The items that rank higher receive substantially more attention than items that rank lower, and hence those items significantly contribute to whole marketplace revenue. Therefore, when aiming to increase marketplace and seller revenue it is essential to optimize and properly price the highly rank items. 

To determine the optimal price for an item, we consider a case in which there is only one buyer and many items, where each item is displayed alongside other items in search results. The item co-occurs in search with a different items every time, depending on the search query and therefore our predicted price for this item changes from search to search. For each item, their optimal price is a random variable depending on value distributions of co-displayed items. The final price for a given item is determined based on all the searches it appeared in and all the predicted prices it had in those searches, i.e. it is based on the distribution of predicted prices in all search results.




We applied our price prediction model on Airbnb Experiences marketplace dataset. Airbnb Experiences are handcrafted activities designed and led by expert hosts that offer a unique taste of local scene and culture. The Experiences marketplace covers more than 1,000 destinations worldwide, including unique places like Easter Island, Tasmania, and Iceland. As the marketplace grew, data-driven host and guest recommendations became very important factors for the continued growth and success of the marketplace. One of the important aspects of hosting an experience is how to price it. Hosts determine the initial price of the activity they offer but they often change it afterwards to adapt to demand trends and competition. However, since hosts do not have a detailed overview of the demand, seasonality and the competition, they are not able to determine the optimal data-driven price themselves. For this reason Airbnb offers hosts price recommendations that are based on the up-to-date demand, seasonality and competition.

We used 6 months of anonymized search data from Airbnb Experiences marketplace to test our new pricing strategy, which aims to maximize revenue for the marketplace. We conducted comprehensive offline experiments to test the performance of the proposed strategy and demonstrated that it outperforms existing state-of-the-art pricing strategy based on several relevant metrics. 






\section{Related work}
\label{sec:related}
%

Machine learning empowered pricing strategy has drawn remarkable attention in data science area, where the availability of large data enables the forecast of demand of products or other target quantities. There has been a rapid development in demand estimation using machine leaning \cite{bajari2015demand}. However, the generalization of these demand learning approaches to Airbnb marketplace is rather challenging. In \cite{ye2018customized}, the authors discussed several challenges of deriving an accurate demand estimation for Airbnb Homes, which are also applicable to Airbnb Experiences. For example, similarly to how the price of a specific Airbnb Home rarely changes for different days in the calendar year, the price of a specific Airbnb Experience hosted on different days is mostly the same or varies in small ranges, which makes the price extrapolation very difficult. In addition, experiences that belong to different categories are quite different, e.g. surfing vs. cooking class, which jeopardizes the generalization of the demand estimation from one category to another. Most pricing problems are applications of revenue management theory \cite{phillips2005pricing, chiang2007overview}, which has been an active topic in academic research and relates closely to dynamic pricing \cite{yeoman2010revenue, talluri2006theory, chen2015recent, den2015dynamic, dong2009dynamic, yabe2017robust, do2011demand, tsai2009dynamic}. Another branch focuses more on static pricing (see \cite{kunz2014demand} for a survey). However, most studies either assume a known demand function, or addresses the demand uncertainty through learning methods with limited practical applicability. Recently, \cite{ito2017optimization} considers a static pricing strategy for multi-products with substitution. Their work consists of a modeling stage to predict the demand given price and other features, and a second optimization step to find optimal price that maximize the profit function. 

Airbnb, as a growing community marketplace, gains considerable attention from academic research, ranging from its impact on traditional hotel industry \cite{zervas2017rise} to the analysis of its pricing strategies \cite{kwok2018pricing, chen2017consumer}. The problem of our interest is optimal item-pricing, which fits into the general framework of the optimal multi-dimensional deterministic mechanism design. The optimal mechanism design is a fundamental problem in economics and has attracted substantial attention in the theory of computation community. The major focus of existing literature in computer science is to study the computational efficiency of the mechanism \cite{chawla2007algorithmic, chawla2010multi, bhattacharya2010budget, hart2017approximate, cai2012algorithmic, cai2012optimal}. Among various problem formulations, we found the pricing scenario on Airbnb Platform is closely related to the Bayesian Unit-demand Item-Pricing Problem \cite{chawla2007algorithmic}, which studies the revenue-maximizing pricing strategy against a single unit-demand consumer of whom the valuation for all items $\bm{v} = (v_1, \cdots, v_N)$ are known by the seller. In addition, the consumer will select the item with the maximum value and price difference $v_i - p_i, i \in \{1, \cdots, N\}$. In solving this multi-dimensional mechanism,  \cite{chawla2007algorithmic} and \cite{chawla2010multi} obtain polynomial-time constant factor approximations to the optimal revenue. Later \cite{cai2015extreme} proposes a near-optimal polynomial-time approximation through algorithm reduction techniques such as probability and domain discretization. Our problem formulation follows that of \cite{cai2015extreme}, with an explicit characterization on the distributions of consumer's valuations (Normal distribution). When trying to implement their proposed algorithm, we realized that there was a practicality issue that hinders its real-world application. To address this issue, we build a machine learning model to learn the value distributions, and then resort to numerical algorithms to efficiently solve the problem.

\section{Problem Definition}
\label{sec:probdef}
%

In this section we give an overview of the problems we are interested in solving in order to find the best price for an item. 
Our first goal is to find the right price for each item appearing in the top search results such that we can maximize the revenue of the suppliers. In other words, in user's single search for items, we restricted our self to the top N ranked items on the page, $i=1, 2, ..., N$. For each item we assume a value distribution is given. This value distribution needs to be learned by us as well. We then want to find a price vector that will allow the suppliers to maximize the total revenue potential in that specific search. At last we will aggregate these pricing vectors over time in order to suggest the best item price to the item supplier. 

Formally, our pricing problem follows the scenario described in \cite{cai2015extreme}. Suppose there is a single seller, with $N$ products to sell, and one consumer who is unit-demand, i.e., the consumer is interested in purchasing at most one product. The seller has access to the distributions of the consumer's valuations on $N$ products $\bm{v}=(v_1, \cdots, v_N)$. Specifically, we assume that $v_i, i \in \{1,\cdots,N\}$ are mutually independent random variables drawn from a set of known distributions $F_i, i \in \{1,\cdots,N\}$. The consumer is assumed to purchase the product with the largest value and price gap. Then, given a price vector $\bm{p}=(p_1, \cdots, p_N)$, the expected revenue of the seller is defined as 
\begin{equation}
\label{objective}
\E R_{\bm{p}} = \sum_{i=1}^N p_i \cdot P_r[i=\argmax\{ v_j-p_j\} \wedge (v_i-p_i \geq 0)].
\end{equation} 
This problem formulation can be naturally applied to the Airbnb Experiences Marketplace. On Airbnb Experiences platform, there are thousands of travelers searching for experiences every day. For each search initialized by a user, our platform recommends top experiences based on a machine learning algorithm that takes as input various user features, query information and experiences features and outputs a probability of booking. Finally, the experiences are ranked based on the predicted booking probabilities and shown to the user in that order.

Our goal is to find a near-optimal price vector $\bm{p}$ that maximizes the expected revenue. In literature, a natural approach is to discretize the domain of price and then search for the optimum in the discretized domain. However, the running time of resulting algorithms is exponential in the number of products \cite{chawla2007algorithmic, hartline2005near}. \cite{cai2015extreme} develops a near-optimal polynomial-time algorithm for this problem, whose running time is polynomial in $\max\{n^{\log^{11} r\cdot\log\log r}, n^{\frac{\log^3r\cdot \log\frac{1}{\epsilon}}{\epsilon^8}}\}$ with $r\geq 1$ and $\epsilon$ being the approximation error. Nevertheless, even for moderate $\epsilon$, the resulting running time would still be very long. This severe trade-off between computational efficiency and approximation accuracy makes it impractical for us to use. In Section \ref{sec:rev_max}, we introduce how we adapt the formulation \eqref{sec:rev_max} to Airbnb Experiences marketplace to solve it efficiently.

Traditional revenue maximization pricing strategies usually optimize for the expected revenue defined by
\begin{equation}
\label{demand_rev}
   \E R_p^* = \sum_{i=1}^N p_i \cdot D(i, p_i),
\end{equation}
where $D(i, p_i)$ is the probability of the product $i$ being booked at price $p_i$. In common marketplaces, $D(i, p_i)$ is often predicted using product features, prices, and spatial and temporal data 
\cite{bajari2015demand}. Clearly, an accurate estimation of $D(i, p_i)$ is critical to the success of \eqref{demand_rev}, which is in general difficult for Airbnb Experiences marketplace. More importantly, formulation \eqref{objective} allows competition and substitution effects in demand as it replaces individual booking probability $D(i, p_i)$ with a winning probability. During a particular search, one of the driving factors of whether or not an experience will be booked is its relative competitiveness over the other co-displayed experiences, which is not well-captured by $D(i, p_i)$.

\section{Model}
\label{sec:model}

In this work, we propose a two-stage pricing model for supply revenue maximization using search events data. 
An accurate model for item values in the first phase and an efficient optimization in the second phase are the two key components of our pricing strategy.
In the first stage, we use a regression model to predict the booked price for each experience. The booked price is used as the value surrogate of an experience from the view of our experiences guests. In the second stage, we construct a supply revenue optimization problem on the basis of value model to find the optimal price in terms of revenue maximization for each search.
 To circumvent the inherent challenge of the exponential solution space and the inefficiency of existing discretization and approximation techniques \cite{cai2015extreme}, we apply numerical optimization algorithms to achieve practicality and scalability.  

\subsection{Value models}
\label{subsection:value}
Our first problem to solve is to find the inherent value of each item. We rely on the marketplace feedback at this stage, using the purchase event as confirmation of value for the booked experience. Specifically, whether to book an experience at a specific price or not is a decision made by our customers, and thus the booking event represents customer's validation of the price set by the seller. This is common practice for picking ground truth when modeling a marketplace, because it is a meeting point of demand and supply. We model the booked price using a variety of demand, supply and item relevant features. More formally we do a regression with the following loss function: 
\begin{equation}
\label{eqregression}
\sum_{j=1}^m (f_{\bm{\theta}}(\bm{x_j}) - y_j)^2 + \lambda \cdot \Vert \bm{{\theta}}\Vert^2,
\end{equation} 
where $y_j$ is the booked price, $\lambda$ is the regularization factor, $m$ is the number of bookings in the training set, $\bm{\theta}$ is the parameter to learn, and $\bm{x_j}$ is a set of features which describe the booked experience as well as the overall demand and market conditions.
\begin{table}[thbp]
\centering
    \scalebox{1}{
        \begin{tabular}{l | l}
        \toprule
         Category & item category \\
         Host language & \# languages spoken by the host  \\
         Reviews & \# of reviews \\
         AVG Review & Average Review score  \\
         Photo Quality & The picture quality of the item \\
         Conversion Rate & Conversion rate of the item in search \\
         Demand Score & an index of demand for an item \\
        \bottomrule
        \end{tabular}
    }
    \vskip 0.05in
    \caption{A subset of features used to learn the value model. 
    }
    \label{tab:features}
\end{table}
Table \ref{tab:features} reports a subset of the features we considered during value learning phase.  

In order to find a value distribution for each item, we make an assumption that for each experience, the value follows a normal distribution $N(\mu_i, \sigma_i^2), i = 1,\cdots, N$, where $\mu_i$ is output from our value model and $\sigma_i^2$ is estimated by looking at values of the experience $i$ in the past month. The optimization for the objective function outputs price vectors that are in the same scale as input values, so using booked price ensures the price suggestion will land in a reasonable range of market prices. We use XGBoost \cite{chen2016xgboost} to train the value model. 

The output of the first phase is a predicted booking price $f_{\bm{\theta}}(\bm{x_j}) = v_j$ for every experience. We use this prediction as the mean of a value distribution for experiences in the second optimization stage. 


\subsection{Revenue Maximization Pricing Strategy}
\label{sec:rev_max}
So far we have proposed a method that is able to learn a value distribution for each item. In the next stage of our solution we aim at determining an optimal price for each item considering their search context and other items that appear alongside them in the search results. 
Our proposed pricing strategy considers that these set of items are "competing" among each other, and its objective is to maximize the suppliers revenue. We proceed by computing the optimal price for each search event individually, and then aggregating the computed prices to output a single price suggestion for each item. 

More formally, for each search, we maximize the following objective function:
\begin{equation}
\label{new_objective}
\E R_{\bm{p}} = \sum_{i=1}^N p_i \cdot P_r[i=\argmax\{\alpha_i v_j-p_j\} \wedge (\alpha_iv_i-p_i \geq 0)],
\end{equation} 
where $\alpha_i$ is a search-specific value multiplier capturing information about user's preference in this search. For example, it could be the ranking score from the search ranking model. In general, if an item has a higher ranking score in a search, then its value in that search should also be amplified. This winning probability takes into account the fact that in addition to user preferences, price is an important determining factor during purchase. 
To compute the optimal price for each search, we rewrite the winning probability explicitly as a function of the distribution function of values. We assume that the values for experiences in the same search are mutually independent variables drawn from $N(\mu_i, \sigma_i^2)$, as described previously.  
To reduce the computation load, we constrain the search space to bounded sets:
\begin{itemize}

\item Truncate the value distributions $F_i, i=1,\cdots, N$ from $R$ to a bounded range $[v_{min}, v_{max}]$. In implementation, this means that for each value distribution, we shift all probability mass above $v_{max}$ to the point $v_{max}$ and all probability mass below $v_{min}$ to the point $v_{min}$. Choice of $v_{min}$ and $v_{max}$ will be given in Theorem \ref{thm1}. 
\item Restrict the price vector to a bounded set $[\xi v_{min}, v_{max}]$, $\xi > 1$ and consequently this reduces the search space from $R^N$ to bounded rectangles $[\xi v_{min}, v_{max}]^N$. This also ensures that price output from the algorithm will fall into a reasonable region.
\end{itemize}
These constraints on input and output variables incur loss on the revenue, and we will bound this loss in section \ref{theory}. After domain truncation, we can rewrite the winning probability of $i$-th item as: 
\begin{equation}
\label{prob_new}
\begin{aligned}
    &Pr(i=\argmax \{ \alpha_j v_j - p_j\} \wedge (\alpha_i v_i - p_i) \geq 0)\\
    &= Pr(\cap_{j\neq i}\{\alpha_j v_j - p_j < \alpha_i v_i - p_i\} \wedge (\alpha_i  v_i - p_i) \geq 0)\\
    &= \int_{v_{min}}^{v_{max}}Pr(\cap_{j\neq i}\{\alpha_j v_j - p_j < \alpha_i v - p_i\} \wedge (v \geq \frac{p_i}{\alpha_i})  \vert v_i = v)f_i(v)dv \\
    &= \int_{\max(\frac{p_i}{\alpha_i} , v_{min})}^{v_{max}}Pr(\cap_{j\neq i}\{\alpha_j v_j - p_j < \alpha_i  v - p_i\} \vert v_i = v)f_i(v)dv \\
    &=  \int_{\max(\frac{p_i}{\alpha_i} , v_{min})}^{v_{max}}\prod_{j\neq i}F_j((\alpha_i v - p_i+p_j)/\alpha_j )f_i(v)dv, \\ 
\end{aligned}
\end{equation}
where $F_i$ and $f_i$ are the distribution function and probability density function of the $i$-th experience's value distribution, respectively. In our experiments, \eqref{prob_new} is evaluated numerically by discretizing the value support.
The objective function can be rewritten as 
\begin{equation}
\E R_p = \sum_{i=1}^N p_i    \int_{\max(p_i/\alpha_i , v_{min})}^{v_{max}}\prod_{j\neq i}F_j((\alpha_i v - p_i+p_j)/\alpha_j )f_i(v)dv.
\end{equation}
For top $N$ experiences from each search event $j$, we calculate optimal price vector $p^{(j)}_1, \cdots p^{(j)}_N$. Since the expected revenue is a function of winning probabilities, which depend on the co-displayed experiences, the optimal price of the same experience is a random variable depending on the underlying value distributions of all top experiences in the search. For the experience $i$, we aggregate prices obtained from all search events where it appeared as one of the top results by taking the average, i.e., $p^*_i = \frac{1}{n_i}\sum_{j: i\in S_j}p^{(j)}_j$, where $S_j$ is the set of experiences that were ranked on top for the $j$-th search event, and $n_i$ is the number of search events where the experience $i$ were ranked on top.

\subsection{Theoretical results}
\label{theory}


In this part, we study the revenue loss due to the restriction and truncation performed on the price and value distribution support, respectively. We first restate the Lemma 24 and Lemma 27 in \cite{cai2015extreme}, which present results on the restriction of price vector.
\begin{lemma}
\label{lm1}
Suppose that the values of items are independently distributed on $[v_{min}, v_{max}]$, and for any price vector $\bm{p} = (p_1,\cdots, p_n)$, construct a new price vector $\hat{\bm{p}}$ as follows: $\hat{p}_i = v_{max}, \ \text{if }  p_i > v_{max}$, $\hat{p}_i = v_{min}, \ \text{if }  p_i < v_{min}$, and otherwise $\hat{p}_i = p_i$. Then the expected revenue $ \E R_{\hat{\bm{p}}}$ and $ \E  R_{\bm{p}}$ from two price vectors $\hat{\bm{p}}$ and $\bm{p}$ satisfy $\E R_{\hat{\bm{p}}} \geq \E  R_{\bm{p}}$.
\end{lemma}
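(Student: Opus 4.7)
The plan is to decompose the truncation $\bm{p} \mapsto \hat{\bm{p}}$ into two sequential operations---first clip all coordinates above $v_{max}$ down to $v_{max}$ to produce an intermediate vector $\tilde{\bm{p}}$, then lift all coordinates below $v_{min}$ up to $v_{min}$---and prove a \emph{pointwise} revenue inequality $R_{\hat{\bm{p}}}(\bm{v}) \geq R_{\bm{p}}(\bm{v})$ for every admissible realization $\bm{v}$ at each step. Since the lemma's construction equals the composition of the two steps, integrating over $\bm{v}$ and chaining yields $\E R_{\hat{\bm{p}}} \geq \E R_{\tilde{\bm{p}}} \geq \E R_{\bm{p}}$.

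For the downward step, any coordinate $p_i > v_{max}$ yields $v_i - p_i < 0$ for every $\bm{v}\in[v_{min}, v_{max}]^n$, so item $i$ never satisfies the purchase condition under $\bm{p}$ and contributes zero revenue. Replacing $p_i$ by $v_{max}$ keeps $v_i - \tilde{p}_i \leq 0$, so item $i$ is still (generically) unpurchased. Moreover, any winning competitor $j$ under $\bm{p}$ satisfies $v_j - p_j \geq 0$, which automatically dominates the new competing gap $v_i - v_{max} \leq 0$, so $j$'s winning event is unchanged. Pointwise revenue is therefore preserved, with measure-zero ties at $v_i = v_{max}$ only contributing extra.

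For the upward step, let $I = \{i : \tilde{p}_i < v_{min}\}$ and fix a realization $\bm{v}$. Let $W$ be the winner under $\tilde{\bm{p}}$. If $W \notin I$, then raising the prices of items in $I$ strictly decreases their gaps while leaving $W$'s gap unchanged, so $W$ remains the argmax under $\hat{\bm{p}}$ and contributes the identical revenue. If $W \in I$, the original revenue is $\tilde{p}_W < v_{min}$; under $\hat{\bm{p}}$ every coordinate lies in $[v_{min}, v_{max}]$, and at least one item still has nonnegative gap (since $v_k - v_{min} \geq 0$ for $k \in I$), so a new winner $\hat{W}$ exists and pays $\hat{p}_{\hat{W}} \geq v_{min} > \tilde{p}_W$. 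Either way, $R_{\hat{\bm{p}}}(\bm{v}) \geq R_{\tilde{\bm{p}}}(\bm{v})$ pointwise.

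The main obstacle is the sub-case $W \in I$ of the upward step. The cleanest handling hinges on the fact that the lift is \emph{simultaneous} across all of $I$: after the lift, every price in $\hat{\bm{p}}$ lies in $[v_{min}, v_{max}]$, so whichever item is newly purchased contributes at least $v_{min}$---strictly greater than the original revenue $\tilde{p}_W$. A single-coordinate argument would not suffice here, since raising one low price while leaving other low prices fixed can redirect the sale to another underpriced item in $I$ and strictly decrease revenue; the simultaneity of the upward truncation is what guarantees the revenue floor $v_{min}$ for the new winner.
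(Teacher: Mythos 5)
Your proof is correct, and there is in fact no in-paper argument to compare it against: the paper states Lemma~\ref{lm1} as a restatement of Lemmas~24 and~27 of \cite{cai2015extreme} and imports it without proof. Your two-stage pointwise coupling---first clip prices above $v_{max}$ down to $v_{max}$, then lift prices below $v_{min}$ up to $v_{min}$, and verify $R_{\hat{\bm{p}}}(\bm{v}) \geq R_{\bm{p}}(\bm{v})$ realization by realization before integrating---is essentially the standard proof of those results, and you correctly isolate the one genuinely delicate point: the upward lift must be applied \emph{simultaneously} to all underpriced coordinates, since only then does every post-lift price lie in $[v_{min}, v_{max}]$ and the new winner is guaranteed to pay at least $v_{min}$, strictly more than the sub-$v_{min}$ revenue collected before the lift. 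It is worth noting that your upward step is sharper than the paper's Lemma~\ref{lm2}, which lifts prices to an arbitrary $\delta>0$ and consequently concedes an additive $\delta$ of expected revenue; in your setting the lift target $v_{min}$ lower-bounds every value, so each lifted item keeps a nonnegative gap, the sale is never extinguished, and the loss is exactly zero---which is precisely why Lemma~\ref{lm1} carries no additive slack while Lemma~\ref{lm2} must. One cosmetic caveat: ties at zero gap (e.g.\ $v_i = v_{max}$ after clipping) need not be measure-zero once the distributions carry atoms at the endpoints, as they do after the truncation of Theorem~\ref{thm1}; but, as you observe, any such tie can only redirect the sale to an item priced at $v_{max}$, so the inequality holds under any tie-breaking convention.
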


\begin{lemma}
\label{lm2}
$\forall \delta > 0$, for any price vector $\bm{p} = (p_1,\cdots, p_n)$, define $\bm{p}'$ as follows, let $p_i' = p_i$, if $p_i \geq \delta $, and otherwise $p_i' = \delta$.
Then the expected revenues $ \E  R_{\bm{p}}$ and $ \E  R_{\bm{p'}}$ from these two price vectors satisfy $\E R_{\bm{p'}} \geq \E R_{\bm{p}} - \delta$.
\end{lemma}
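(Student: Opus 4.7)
The plan is a pointwise coupling argument. The new price vector $\bm{p'}$ agrees with $\bm{p}$ on every coordinate where $p_i \ge \delta$ and raises the remaining coordinates up to $\delta$, so buyer utilities satisfy $v_j - p'_j \le v_j - p_j$ for all $j$, with equality exactly on the ``large-price'' coordinates. I would track, for each realization of $\bm{v}$, which item the consumer buys under $\bm{p}$ and show that either the same revenue is collected under $\bm{p'}$, or the original revenue on that realization was already less than $\delta$.

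Let $i^\ast = i^\ast(\bm{v})$ denote the item selected under $\bm{p}$, which requires $\max_j (v_j - p_j) \ge 0$. I would split into two cases. In the large-price case $p_{i^\ast} \ge \delta$, we have $p'_{i^\ast} = p_{i^\ast}$, and for every other $j$,
\[
v_j - p'_j \;\le\; v_j - p_j \;\le\; v_{i^\ast} - p_{i^\ast} \;=\; v_{i^\ast} - p'_{i^\ast},
\]
so $i^\ast$ is still an argmax with nonnegative utility under $\bm{p'}$. With a consistent tie-breaking rule, $i^\ast$ remains the purchased item under $\bm{p'}$, contributing exactly $p_{i^\ast}$ to the revenue. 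In the small-price case $p_{i^\ast} < \delta$, the contribution to $\E R_{\bm{p}}$ is just $p_{i^\ast} < \delta$, while the contribution to $\E R_{\bm{p'}}$ is some nonnegative quantity that I simply discard.

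To assemble, I would decompose $\E R_{\bm{p}}$ into contributions from the large-price event, the small-price event, and the no-purchase event. The large-price contribution is preserved exactly under $\bm{p'}$ by the first case; the no-purchase contribution is $0$ under $\bm{p}$ and $\ge 0$ under $\bm{p'}$; and the small-price contribution is at most $\delta \cdot \Pr(\text{small-price event}) \le \delta$. Dropping everything that $\bm{p'}$ gains on the other events, we obtain $\E R_{\bm{p'}} \ge \E R_{\bm{p}} - \delta$.

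The only delicate point I expect is tie-breaking in the argmax when several items achieve the same maximum utility under $\bm{p'}$; the argument needs a tie-breaking convention (or a mild symmetrization) that retains $i^\ast$ as the chosen item whenever it remains tied. Apart from this, no nontrivial estimate is required beyond the trivial bound $p_{i^\ast} < \delta$ on the small-price event.
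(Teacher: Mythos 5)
Your argument is correct. Note that the paper itself gives no proof of this lemma: it is restated from Lemmas 24 and 27 of the cited reference (Cai and Daskalakis), so there is nothing in the text to compare against line by line; your pointwise coupling --- preserve the winner exactly when $p_{i^\ast}\ge\delta$, and write off at most $\delta$ of revenue on the event that the purchased item had price below $\delta$ --- is the standard and essentially the intended argument. The tie-breaking worry you raise is even milder than you suggest: since raising prices to $\delta$ only lowers the utilities $v_j - p_j$ of the cheap items while leaving $v_{i^\ast}-p_{i^\ast}$ unchanged in the large-price case, no \emph{new} ties with $i^\ast$ can be created, so any tie-breaking rule consistent between $\bm{p}$ and $\bm{p'}$ suffices; and in this paper's setting the values are continuously distributed, so ties occur with probability zero anyway.
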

By Lemma \ref{lm1} and Lemma \ref{lm2}, we can see that when values of items are independently distributed on $[v_{min}, v_{max}]$, then for any price vector $\bm{p} \in R^N$, if we transform it to another price vector $\bm{p}''\in  [\xi v_{min}, v_{max}]^N$, $\xi > 1$, then the expected revenue $ \E R_{\bm{p}''}$ and $ \E  R_{\bm{p}}$ from these two price vectors $\bm{p}''$ and $\bm{p}$ satisfy $\E R_{\bm{p}''} \geq \E R_{\bm{p}} - \xi v_{min}$.

Next we show that we can truncate the support of value distributions to bounded range without hurting much revenue. 
\begin{theorem}
\label{thm1}
Given a collection of random variables $\{ v_i \}_{i=1,\cdots,N}$, where $v_i \sim N(\mu_i, \sigma_i^2)$, if we truncate their distributions to a bounded range $[v_{min}, v_{max}]$, where $v_{max} = \max_{i=1,\cdots, N}\{ Z_i^{\alpha}\}$ and $v_{min} = \min_{i=1,\cdots, N}\{ Z_i^{1-\alpha}\}$ with $Z_i^{\alpha}, \alpha \in (0,1)$ being the $\alpha$-quantile of distributions of $v_i$ \footnote{In our experiments, $\alpha$ is often set as $0.975$.}. For any price vector $\bm{p} \in [\xi v_{min}, v_{max}]^N$, $\xi > 1$, $\vert \E R_{\bm{p}} - \E \hat{R}_{\bm{p}} \vert \leq v_{max}\cdot (1-\alpha^N)$, where $R_{\bm{p}}$ and $\hat{R}_{\bm{p}}$ are the revenues when the consumer's values are distributed before and after truncation respectively. 
\end{theorem}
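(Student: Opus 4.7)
The plan is to bound the revenue gap by coupling the two value distributions and showing they agree on a high-probability ``in-range'' event, then using the boundedness of prices on the complementary event. The key structural observation is that both the original and truncated revenues per realization are determined by the same deterministic functional $(\bm{v},\bm{p}) \mapsto p_{i^*}\cdot \mathbb{1}[v_{i^*}\geq p_{i^*}]$ with $i^*=\argmax_j(v_j-p_j)$, so whenever the input value vectors coincide, the revenues coincide pointwise.

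First I would introduce the coupling $\hat v_i = \min(\max(v_i, v_{\min}), v_{\max})$, which has exactly the truncated marginal law and satisfies $\hat v_i = v_i$ on the event $E_i=\{v_i\in[v_{\min},v_{\max}]\}$. Let $E=\bigcap_{i=1}^N E_i$. On $E$, the realized revenues agree: $R_{\bm{p}}(\bm{v})=\hat R_{\bm{p}}(\hat{\bm v})$. On $E^c$, I would use the crude pointwise bound $|R_{\bm{p}}(\bm v)-\hat R_{\bm p}(\hat{\bm v})|\leq v_{\max}$, which follows from the restriction $\bm p\in[\xi v_{\min},v_{\max}]^N$ so that whichever item is purchased (in either scenario), the collected price is at most $v_{\max}$, and both revenues are nonnegative. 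Combining these two observations with the tower rule gives
\begin{equation*}
\bigl|\E R_{\bm p}-\E\hat R_{\bm p}\bigr|\;\leq\;v_{\max}\cdot P(E^c)\;=\;v_{\max}\cdot\bigl(1-P(E)\bigr).
\end{equation*}

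Next I would bound $P(E)$ from below. Independence of the $v_i$ gives $P(E)=\prod_{i=1}^N P(E_i)$, and the quantile choice of $v_{\min},v_{\max}$ yields $P(v_i\leq v_{\max})\geq P(v_i\leq Z_i^{\alpha})=\alpha$ and $P(v_i\geq v_{\min})\geq P(v_i\geq Z_i^{1-\alpha})=\alpha$. The main obstacle is upgrading these two one-sided statements into a single per-coordinate bound $P(E_i)\geq \alpha$ compatible with the stated $\alpha^N$ factor; a plain inclusion--exclusion only delivers $P(E_i)\geq 2\alpha-1$. Two natural ways to close this gap are (i) exploit the symmetry of the normal marginals $N(\mu_i,\sigma_i^2)$ together with the fact that for $\alpha\geq 1/2$ the dominant tail of $E_i^c$ comes from only one side (effectively a one-sided truncation argument, in the spirit of Lemma~\ref{lm1} paired with the price restriction $p_i\geq \xi v_{\min}$), or (ii) refine the coupling so that the contribution of $\{v_i<v_{\min}\}$ to the revenue gap vanishes, because whenever $v_i<v_{\min}<p_i$ item $i$ is not purchased under either law, leaving only the upper tails $\{v_i>v_{\max}\}$ to control, each of which has probability at most $1-\alpha$.

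Finally, assuming the per-coordinate bound $P(E_i)\geq \alpha$ is established by one of these routes, independence yields $P(E)\geq \alpha^N$, hence $P(E^c)\leq 1-\alpha^N$, and substitution into the display above delivers
\begin{equation*}
\bigl|\E R_{\bm p}-\E\hat R_{\bm p}\bigr|\;\leq\;v_{\max}\cdot(1-\alpha^N),
\end{equation*}
as claimed. The easy pieces are the coupling and the boundedness of revenue on $E^c$; the delicate step is the reduction of two-sided to one-sided truncation on each coordinate, which is where the price floor $\xi v_{\min}$ and the normal-distribution hypothesis are used.
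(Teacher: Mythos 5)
Your proposal is correct and follows essentially the same route as the paper: the same coupling $\hat v_i=\min(\max(v_i,v_{\min}),v_{\max})$, the pointwise bound by $v_{\max}$ off the in-range event, and the product bound over independent coordinates. The ``delicate step'' you flag is resolved exactly by your route (ii) — since $p_i\geq \xi v_{\min}>v_{\min}$, an item with $v_i\leq v_{\min}$ is purchased under neither law, so only the upper tails $\{v_i>v_{\max}\}$ (each of probability at most $1-\alpha$) contribute — and this is precisely the argument the paper gives.
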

\begin{proof}
For a set of random variables $\{ v_i \}_{i=1,\cdots,n}$ that are distributed as $v_i \sim N(\mu_i, \sigma_i^2)$, define a new set of random variables $\{ \hat{v}_i \}_{i=1,\cdots,n}$ as 
\begin{equation}
    \hat{v}_i =  \left \{  
    \begin{aligned}
    &v_{max},  &\ if \  v_i > v_{max}\\
    &v_{min},  &\  if \ v_i \leq v_{min}\\
    &v_i, \ \ &\text{otherwise}.
    \end{aligned}
    \right.
\end{equation}
An important fact is that for any given price $\bm{p}$, $R_{\bm{p}}$ and $\hat{R}_{\bm{p}}$ are different only when $v_i \neq \hat{v}_i$ for some $i$, which reduces to the event that $\exists i, v_i > v_{max}$. To see this, if $\forall i$, $v_{min} < v_i \leq v_{max}$, then $v_i = \hat{v}_i$, and thus $R_{\bm{p}} = \hat{R}_{\bm{p}}$. If $\exists i$ such that $v_i \leq v_{min}$, and thus $\hat{v}_i = v_{min}$, then the price of item $i$ is higher than values $v_i$ and $\hat{v}_i$, so the item $i$ will not be purchased for both cases. Since the maximum price is $v_{max}$, we have the following bound for $\vert \E R_{\bm{p}} - \E\hat{R}_{\bm{p}} \vert$,
\begin{equation}
    \begin{aligned}
    \vert \E R_{\bm{p}} - \E\hat{R}_{\bm{p}} \vert &\leq v_{max} \cdot  \Pr[\exists i, v_i > v_{max}]\\
    & = v_{max} \cdot  \Pr[\max_i v_i > v_{max}]\\
    & = v_{max} \cdot (1-\Pr[\max_i v_i \leq v_{max}])\\
    & = v_{max} \cdot (1- \prod_{i=1}^{N}\Pr[ v_i \leq v_{max}])\\
    & \leq v_{max} \cdot  (1- \alpha^N)
    \end{aligned}
\end{equation}
\end{proof}

\section{Experiments}
\label{sec:experiments}

In this section we aim to evaluate the effectiveness of the proposed solution using two search data sets, which are based on real users and not expert users. 
We use a time-based holdout of the search event data as a test set, for comparison purposes. 
We proceed by describing our data set, experimental framework, metrics used for evaluation purposes and parameter tuning. We conclude by discussing the results.




\subsection{Data set}
Our data set is composed of search result data from Airbnb Experiences marketplace. As we mentioned in the introduction, this marketplace allows users to search for unique activities to do while travelling. A typical experience takes 2 hours on average. Experience Hosts can offer these activities to several travellers at the same time, and they can offer the same activity multiple times per day. We sampled a set of anonymized searches which occurred between 1st of January and 31st of May $2019$.  
We kept only searches in which the user actually purchased an experience. 
For each search we collected detailed information on displayed experiences, including the ranking position for all experiences in that search.
In Figure \ref{fig:position_plot} we show the ranking position of the experience that ends up being booked by the customer. As it can be observed, most of the booked experiences were ranking high on the search page. Therefore, we can conclude that the top ranking positions is where the biggest competition for bookings is happening. 
\begin{figure}[thb]
    \centering
    \includegraphics[width=1.0\linewidth]{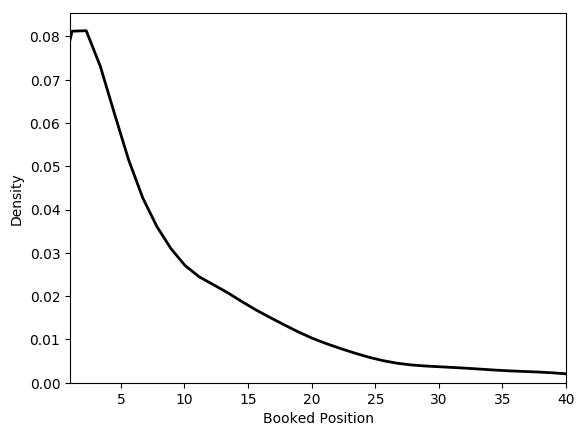}
    \caption{Booking ranking position distribution.} 
    \label{fig:position_plot}
\end{figure}
In addition to experience ranking information we collect a variety of other experience attributes that are used for feature engineering purpose, as described in Section \ref{sec:model}. 
Table \ref{tab:statistics} reports few more statistics of our data set 
\begin{table}[thbp]
\centering
    \scalebox{1}{
        \begin{tabular}{l | r}
        \toprule
         Number of search events & $500k$ \\
         Number of distinct experiences & $54k$ \\
          Number of search events - Data set 2 (training) & $87k$\\
         Number of search events - Data set 2 (test) & $36k$\\
         Number of search events - Data set 1 (training) & $80k$\\
         Number of search events - Data set 1 (test) & $32k$\\
        \bottomrule
        \end{tabular}
    }
    \vskip 0.05in
    \caption{Statistics of our data sets.
    }
    \label{tab:statistics}
\end{table}


A large portion of our training set is used for training the value model, which was described in Section \ref{subsection:value}. We use a small portion of the training set to train our pricing strategy which determines an optimal price for each experience. We consider two scenarios. In the first scenario we train our pricing strategy using the first $23$ days of April and we use the last week of April as our test set, referred to as Data set $1$. In the second scenario we train our pricing strategy on the first $23$ days of May and use the last week of May as a test set, referred to as Data set $2$.

\subsection{Experimental Framework}
For each day in training set, we have a set of searches, where each search contains many experiences displayed to the user. In order to find the optimal price for each search, we need to have the value distribution of each experience that appeared in that search. We therefore train a value model (\VALUE) using all the information available one day before the user search \footnote{Note that this corresponds to the real-world scenario where the machine learning models are re-trained everyday, and used for inference purpose during the next day.}.
For each search, we assume that the value follows a normal distribution centered at our predicted score for each experience. The scores are predicted using the machine learning model that was trained using past booked prices as labels, and listing attributes as features. It should be noted that no price related attributes were included as features for this model, and thus our predicted value is independent of the actual price of experiences.
In order to find the value distribution for each experience $i$ and $ds$, we use a sample standard deviation of values over $30$ days as an estimator of the $\sigma_i^2$.

When training the pricing strategy, we restrict ourselves to the top-$20$ experiences that appear on the search page in each search. 

For each experience shown in a particular search, we use our pricing strategy to compute a price vector. Since the predicted price is search-specific, i.e., experience optimal price depends on the co-displayed experiences, we restrict our attention to the experiences that appeared frequently in the top $20$ search positions and combine the obtained prices from different searches for each experience by a simple average.

The revenue maximization (\REVMAX) strategy which was described in Section \ref{sec:rev_max} is applied to each search to determine the optimal price for each experience from top $20$ positions. The algorithm used to carry out the optimization is L-BFGS-B \cite{liu1989limited, zhu1997algorithm}. L-BFGS-B is a limited-memory quasi-Newton method for simple box-constrained optimization problem. In experiments, to reduce computational complexity, we may further restrict price search space of experience $i$ to a smaller range, e.g., $[\hat{v}_i - 2 \hat{\sigma}_i^2, \hat{v}_i + 2 \hat{\sigma}_i^2]$ where $\hat{v}_i$ and $\hat{\sigma}_i$ are estimated by the output from value model. At the end of this process, every experience has its actual price, and a suggested price. In cases where we are not able to provide any suggestion, e.g. experience were never ranked at top in searches, we use the predicted value from \VALUE model as the suggested price. In the next section we describe how we use the suggested price and the actual price together with booking information to build a set of metrics for evaluation purpose.
\subsection{Metrics}
\label{sec:metrics}
In this section we introduce a set of metrics used in our offline evaluation. Most of our metrics are inspired by previous work on Airbnb dynamic pricing \cite{ye2018customized} and were adapted to our search-based methodology. The main assumption is that if an experience is booked after a particular search then the suggested price should have been same or higher than the booked price, otherwise we have ``regret", and if it was not booked, it would be better to suggest a lower price.

Let $i$ be a generic experience, and $S = \{S_j\}_j$ be a set of searches, where $S_j={i_{1j},  \dots, i_{N_j}}$ is a search containing $N_j$ experiences, $S^*_j$ denotes an experience which got booked during search $j$, $P_{ij}$ denotes the price of experience $i$ in search $S_j$ and $P_{i_{sugg}}$ denotes the suggested price for experience $i$ during the test search\footnote{Note that our suggestion for experience $i$ is fixed for the whole test set.}, we can define our metrics in the following way:

\begin{itemize}
\item \textit{Booking Regret} (\emph{BR}), defined as,    
\begin{equation}
  BR = \median_{S_j \in S} \big( \max(\frac{P_{ij}-P_{i_{sugg}}}{P_{ij}}, 0),i = S^*_j \big),
\end{equation}
where we first compute the regret of each search as the relative difference between the booked experience price and the suggested price for that experience, and then we get the median over all searches. 
The intuition is that a good price suggestion method should not suggest a price that is lower than booked price, which hurts the revenue of suppliers. Thus a lower booking regret
is an indicator of a better price suggestion strategy. On the other hand, the lower the suggested price, the higher the regret w.r.t. the price that the experience was booked for;  

\item \textit{Weighted Booking Regret} (\emph{$BR_w$}), is defined as, 
\begin{equation}
  BR_w = \median_{S_j \in S} \big(  \max(P_{ij}-P_{i_{sugg}}, 0),i = S^*_j \big).
\end{equation}
Since booking regret captures the revenue loss w.r.t. the booked price but not the absolute loss of the suppliers, we define $BR_w$ to measure the absolute revenue loss;

\item \textit{Price Decrease Recall} (\emph{PDR}), is defined as, 
\begin{equation}
  PDR = \frac{\sum_{S_j \in S} |\{i \in S_j | i \neq S^*_j \land P_{i_{sugg}} < P_{ij} \}| }{\sum_{S_j \in S} |\{i \in S_j | i \neq S^*_j \}| },  
\end{equation}
where in the numerator we are considering the experiences that were not booked, and had a lower price suggested than their original price, and the denominator includes all the experiences that were not booked over all searches. The intuition here is that if the experience was not booked, and the price suggestion was higher than the actual price then we have a miss, otherwise we have a hit. Higher \PDR is a possible indication of a better price suggestion. However \PDR has limitations in the presence of competition, e.g. a properly priced experience may still not get booked when it co-occurs with experiences that are more competitive. Another point is that not all non-booked experiences during one search have to be sold for the best outcome. To overcome these limitations and have some insights on what each strategy is thinking when it lowers a price, we further defined \PDRHP (high revenue potential) and \PDRLP (low revenue potential), where we compute \PDR for the two subsets of non-booked experiences. In the first case (\PDRHP) we consider only experiences that have a value above the upper quartile of all experiences values, and a conversion rate that is below the lower quartile, despite receiving many impressions. In the second case (\PDRLP) we consider experiences that have a value below the lower quartile, and high conversion rate (above the upper quartile). A good pricing strategy should have \PDRHP that is higher than \PDRLP, indicating it is targeting the high revenue potential experiences.

\item \textit{Revenue Potential} (\emph{REV\_POTENT}), is defined as, 
\begin{equation}
\begin{aligned}
  REV\_POTENT &= 
  \frac{1}{|S|} \sum_{S_j \in S} \max_{\{i \in S_j | i \neq S^*_j  \land P_{i_{sugg}} < P_{ij} \}} gain_{ij} \cdot D_i ,  \\
 & gain_{ij} = P_{ij} - P_{S^*_{j}j},
\end{aligned}  
\end{equation}
where it considers all non-booked experiences for which we suggested a price that is lower than the actual price, and we want to approximately obtain what would have been the revenue gain if they were booked, due to the adoption of the suggested price. $D_i$ \footnote{We adjusted the demand by an elasticity of demand of $1.5$, that is an increase of $1.5\%$ for a price drop of $1\%$.} indicates a demand index, which is the probability of an experience getting booked. This will be described in more detail in the next section, where we learned this probability for  a comparison with other strategies. 
\item \textit{Recall} (\emph{RECALL}), is defined as the percentage of experiences for which the model was able to suggest a price. 

\end{itemize}

\subsection{Comparisons }

In this section we describe the baselines and related work we used in our offline experiments for comparison to our proposed solution. The Customized Regression Model (\CRM ) proposed in \cite{ye2018customized} for determining an optimal price for Airbnb Home rental is the main related work in our comparisons. The \CRM method consists of two components: a booking probability model and a pricing strategy layer. The booking probability model is constructed to estimate the demand for a future night at specific prices. The authors recognize the difficulty in using the estimated demand directly in \eqref{demand_rev} to maximize the revenue, and therefore construct a second strategy layer that maps the booking probability to a price suggestion.
We implemented the \CRM booking probability model  \cite{ye2018customized} using the same set of features used in our value model (Table \ref{tab:features}), plus a pivot price. In contrast to Airbnb Home marketplace where only a single guest can book a single listing night, in the Airbnb Experience marketplace multiple guests can book the same Experience on the same day. Therefore, we needed to adapt the implementation of \CRM booking probability model to account for the difference by considering experiences which had at least a single booking as positives and ones which had zero bookings as negatives.

The second component of \CRM  requires to learn a demand index function $V_{\bm{\theta}}$, which takes the booking probability as input. To learn $\bm{\theta}$, the \CRM  strategy model adopts a customized loss function, and learns a $\bm{\theta}$ for each experience.  
However, our data set has less price dynamics, and thus it was not ideal to learn $\bm{\theta}$ at the experience level. Therefore, we aggregated the experiences at the market and category level and learned one $\bm{\theta}$ for each market and category. When \CRM  is not able to suggest any price, we used the actual price as suggested one. 

\noindent\textbf{Baselines.} To better monitor the behavior of the set of metrics, we also compare with two baseline pricing strategies. The first strategy prices all products at zero (\ZERO), and the second strategy (\AVG) uses the average booked price observed in the training set as a suggested price.

\section{Results}
\label{sec:results}

\begin{table*}[htbp] 
	\begin{center}

		\scalebox{1}{
\begin{tabular}{lrrrrrrrrr}
\toprule
   position & Experience Title  &   \VALUE &  \REVMAX & price &  \CRM   \\
\hline
 $0$ & A Potter's Wheel in Brooklyn & $43.39$ &	$47.13$	& $45.00$	 & $32.08$ \\
 $1$ & Sailing Tour of New York and Brooklyn &     $83.45$ &  $81.31$ &   $80.00$ &  $164.45$  \\
 $2$ & Rooftop yoga, massage and snacks &     $21.66$ &  $21.02$ &   $22.00$ &   $22.00$   \\
 $3$ & \textbf{New NYC’s \#1 Rooftop Parties Tour} &     $40.94$ &  $40.13$ &   $35.00$ &   $34.01$ \\
 $4$ & Hasidic Brooklyn &     $47.73$ &  $50.40$ &   $49.00$ &   $37.38$ \\
 $5$ & Play \& cuddle with cats and kittens &     $15.82$ &  $13.67$ &   $14.00$ &   $14.77$ \\
 $6$ & Brooklyn Bridge photo-shoot &     $67.68$ &  $63.68$ &   $69.00$ &   $50.78$ \\
 $7$ & The Upper West Side Bookstore Crawl &     $15.00$ &  $13.14$ &   $20.00$ &   $7.07$ \\
 $8$ & Chinatown and Little Italy Tour &     $32.05$ &  $32.54$ &   $35.00$ &   $29.41$ \\
 $9$ & See 30+ Top New York Sights Fun Guide &     $25.27$ &  $29.68$ &    $51.00$ &   $33.91$ \\
  $23$ & Taste of NYC Helicopter Tour  &    $129.20$ &  $122.98$ &  $129.00$ &  $121.65$ \\
\bottomrule
\end{tabular}
}		
    \vskip 0.1in
    \caption{Price suggestion for top $10$ ranking items plus the one with the highest predicted value (position $23$) in one single search. We report the ranking position, the experience title, the \VALUE model suggestion, \REVMAX price suggestion, the actual price of the experience, as well as the \CRM price suggestion. The experience at position $3$ is the booked one.}

    		\label{demo}
	\end{center}
	  \vspace{-0.5cm}
\end{table*}

In this section we present the experimental results which compare our proposed methodology to the baselines and related work.
Table \ref{demo} reports an example of top results appearing during one search event, with the title of the experience, the actual price shown to the user, and the ranking position. We report the results of the \VALUE model for each experience as well as \REVMAX and  \CRM suggested price for that experience. 
We can observe that different pricing strategies priced the experiences quite differently. In this example, the booked experience was ranked at position $3$, and both \VALUE model and \REVMAX strategy increased its calendar price, while  \CRM reduced its price. Compared to \VALUE, if a non-booked experience has a relatively high predicted value (e.g. helicopter tour), then the \REVMAX strategy tends to decrease its predicted value to improve its bookings, as this experience has a higher revenue potential.   

Figure \ref{fig:dist} plots the distribution of suggested prices from different strategies. We can observe that 
\REVMAX strategy has similar distribution to value model, both of them are more centralized and have lighter tails than the  \CRM and actual price.  \CRM has the trend to shift the prices to the left, which may increase bookings at the cost of worse booking regret and booking values.

\begin{figure}[htbp]
\centering
\includegraphics[width=1.0\linewidth]{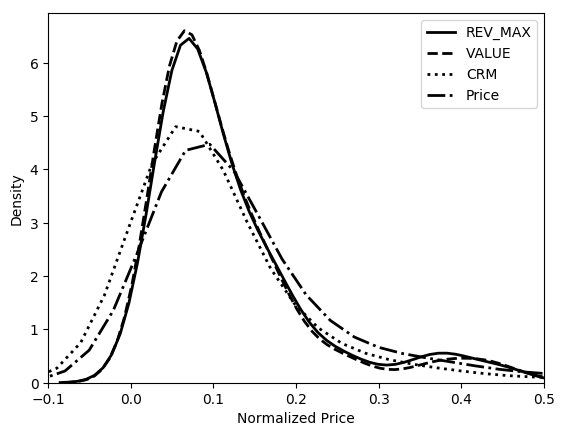}
\caption{Distributions of the suggested prices from different pricing strategy, and the actual price distribution.}
\label{fig:dist}
\end{figure}


\begin{table}[htbp]
\centering

    \scalebox{1}{

\begin{tabular}{lccccc}
\toprule
{} & \RECALL &  \BR &  \BRW &  \PDR  
&  \REVPOTENT \\ 
\midrule
\REVMAX &  $1.00$ &   $0.01$ &  $0.80$ &  $0.56$ 
&  $25.43$ \\
\VALUE &   $1.00$ &   $0.05$   & $1.91$ &  $0.69$ 
&  $21.25$ \\
\CRM &  $0.92$ &  $0.13$ &     $4.88$ &  $0.66$  
&    $19.82$ \\
\ZERO &       $1.00$ &   $1.00$  &    $45.00$ &  $1.00$  
& $-39.29$ \\
\AVG &       \# &  $0.00$ &     $0.00$ &  $0.14$  
&   $14.30$ \\
\bottomrule
\end{tabular}
}
    \vskip 0.05in
    \caption{Evaluation results for New York for data set $1$. We report the results for all pricing models, and baselines. Our model outperforms \CRM and baselines in \REVPOTENT, with an almost zero \BR and a $1.0$ \RECALL.}
    \label{tab:metrics_nyu}
 \vspace{-0.3cm}
\end{table}

Table \ref{tab:metrics_nyu} 
report results for New York's Airbnb Experiences market. Results suggest that our new pricing strategy is able to make suggestions for most of experiences in test set (> 99\%). Compared to  \CRM model, \REVMAX strategy decreases \BR and \BRW by $90\%$ and $83\%$, respectively. \REVMAX strategy also has a higher revenue potential than  \CRM. Compared with \VALUE model, \REVMAX improves \BR and \BRW by $78\%$ and $58\%$, respectively. Since there is a trade-off between \BR and \PDR, \REVMAX decreases \PDR of \VALUE model. In terms of revenue maximization, we attach more importance to lower \BR and \BRW than to a higher \PDR.
The baseline strategies, \ZERO and \AVG, perform very well in some metrics but fail for the others, which demonstrates a trade-off among these metrics and the importance of evaluating using the set of metrics in its entirety. In terms of \PDR, \CRM has higher \PDR than \REVMAX. 

\begin{table}[htbp]
\centering

    \scalebox{1}{

\begin{tabular}{lccccc}
\toprule
{}  &  \PDR &  \PDRHP &  \PDRLP  & \PDRHP/\PDRLP 
\\ 
\midrule
\REVMAX  &  $0.56$  & $0.71$  &    $ 0.50$ &  $1.43$ 
 \\
\VALUE  &  $0.69$ &  \textbf{$0.78$} &    $0.64$  & $1.22$ 
 \\
\CRM &  $0.66$ &    $0.52$ &  $0.64$ & $0.80$ 
 \\
\ZERO &  $1.00$ &      $1.00$ & $1.00$  & $1.00$ 
 \\
\AVG &  $0.14$  &   $0.08$ &    $0.14$  & $0.55$ 
 \\
\bottomrule
\end{tabular}
}
    \vskip 0.05in
    \caption{\PDR distributions of New York for data set $1$. We report the results for all pricing models, and baselines. High \PDRHP and low \PDRLP is desirable. We report the ratio \PDRHP/\PDRLP, which outperforms \CRM and baselines.}
    \label{tab:pdr_nyu}
     \vspace{-0.3cm}
\end{table}

To overcome the limitation of \PDR as described in Section \ref{sec:metrics}, in Table \ref{tab:pdr_nyu} we report the result for New York market with the ratio between \PDRHP and \PDRLP (\PDRHP/\PDRLP). A higher ratio indicates a better pricing strategy. We can see that \REVMAX outperforms \CRM and baselines. \CRM has a higher overall \PDR, yet there is no big  difference between \PDRHP and \PDRLP; In contrast, \REVMAX and \VALUE have much higher \PDRHP than \PDRLP, and consequently result in a better \PDRHP than  \CRM. This implies that non booked experiences which have large room for improvement in terms of revenue contribution have had a lower price suggested, and thus by decreasing the price to improve their bookings, they have a high potential to boost the revenue.



We conducted experiments in top $25$ Airbnb Experience markets and summarized the averaged results in Table \ref{metrics_2}\footnote{For confidentiality we removed the recall of the \AVG strategy.}. For Data set 1, result suggests that our new pricing strategy improves \BR by $48\%$ and \BRW by $44\%$ of \VALUE model, at a cost of decreasing the overall \PDR by $11\%$. Compared with  \CRM, \REVMAX lowers the \BR and \BRW by $90\%$ and $88\%$, respectively. To investigate \PDR, we report the ratio between \PDRHP and \PDRLP for our data sets in Table \ref{pdr_1}. We can observe that though \CRM has a higher overall \PDR, \REVMAX strategy and \VALUE are able to target more accurately the experiences that have a large revenue potential, i.e., larger \PDRHP compared to \PDRLP, and thus both have a higher \PDRHP/\PDRLP than that of \CRM.
In terms of revenue potential, our new strategy has comparable performance with the value model, and both outperform other strategies. The comparison results for Data set 2 are similar.

\begin{table}[htbp] 
	\begin{center}

		\scalebox{1}{
\begin{tabular}{lrrrrrrrr}
\toprule
{} & \RECALL &  BR &  BR\_w &  \PDR &  \REVPOTENT \\ 
\midrule 
\multicolumn{6}{c}{Data set $1$}\\ 
\midrule
\REVMAX &  $0.99$ &   $0.01$ & $0.55$ &  $0.50$  
& $18.45$ \\
\VALUE &      $1.00$ &   $0.02$ &   $0.99$ &  $0.56$ 
&  $19.47$ \\
\CRM &     $0.93$ &   $0.13$ &     $4.73$  &  $0.63$ 
&               $11.89$ \\
\ZERO &        $1.00$ &   $1.00$ &    $43.30$ &  $1.00$ 
&              $-54.86$ \\
\AVG &       \# &   $0.001$ &     $0.06$ &  $0.21$ 
&   $7.97$ \\
\midrule
\multicolumn{6}{c}{Data set $2$}\\
\midrule 
\REVMAX &       $1.00$ &   $0.01$ &    $0.52$ &  $0.49$ &   $16.62$ \\
\VALUE &       $1.00$ &   $0.02$ &     $0.66$ &  $0.52$ &   $17.67$ \\
\CRM &       $0.97$ &   $0.16$ &     $6.36$ &  $0.66$ & $10.64$ \\
\ZERO  &       $1.00$ &   $1.00$ &    $43.82$ &  $1.00$ &  $-51.00$ \\
\AVG &       \# &   $0.00$ &     $0.05$ &  $0.24$ &  \#  \\
\bottomrule
\end{tabular}
}
    \vskip 0.05in
    \caption{Evaluation results of top $25$ Markets using both data sets. We report the results for all pricing models, and baselines. Our \REVPOTENT is better than \CRM and baselines. Our \BR is almost zero, with a \RECALL almost $1.0$.} 
    		\label{metrics_2}
	\end{center}
 \vspace{-0.3cm}
\end{table}

\begin{table}[tbp] 
	\begin{center}

		\scalebox{1}{
\begin{tabular}{lccccc}
\toprule
{} &  \PDR &  \PDRHP &  \PDRLP  & \PDRHP/\PDRLP
\\ 
\midrule
\multicolumn{5}{c}{Data set $1$}\\ 
\midrule
\REVMAX &  $0.50$  & $0.64$ &    $0.39$  & $1.65$  
 \\
\VALUE & $0.56$ &   \textbf{$0.76$} & $0.45$  &  $1.70$ 
 \\
\CRM  &  $0.63$ &    $0.56$ &  $0.54$ & $1.03$ 
\\
\ZERO  &  $1.00$ &    $1.00$ &    $1.00$ & $1.00$  
 \\
\AVG  &  $0.21$ &    $0.30$ &    $0.19$  & $1.57$ 
\\
\midrule
\multicolumn{5}{c}{Data set $2$}\\ 

\midrule
\REVMAX  &  0.49 &  0.59 &    0.37 &    1.59              \\
\VALUE&  0.52 &  0.70 &    0.41 &      1.68              \\
\CRM  &  0.66 &  0.63 &    0.57 &       1.10           \\
\ZERO  &  1.00 &   1.00 &    0.99 &     1.00            \\
\AVG &   0.24 &  0.21 &    0.23 &   0.92 \\
\bottomrule
\end{tabular}	
}
	\end{center}
    \vskip 0.05in
	\caption{\PDR distributions of top $25$ Markets for two data sets. We report the results for all pricing models, and baselines. High \PDRHP and low \PDRLP is desirable. We report the ratio \PDRHP/\PDRLP, which outperforms \CRM and baselines.} 
	
		\label{pdr_1}
	 \vspace{-0.2cm}	
\end{table}

\section{Conclusions and future work}
\label{sec:conclusion}
\balance
%
In this paper, we propose a new pricing strategy that aims to maximize revenue for Airbnb Experiences Marketplace using search results. We started from a simple idea that when users are looking for items in a marketplace they typically search for it. The top ranked items shown to the user are therefore "competing" with each other for purchases. 

We found that the algorithmic solution proposed in previous work \cite{cai2015extreme} is not applicable to the real-world scenario. 
We proposed a practical solution to maximize the revenue for a unit-demand, single seller, multidimensional pricing problem in the context of search. Our unique input to the algorithm is a learned distribution of value for each item, and the set of top ranking items on search page. We then aggregate the best price for items found during search events and come up with the best price for each item. To reduce computation complexity of the proposed solution, we restrict the price space as well as the support of value distributions, and establish a theoretical bound on the revenue loss incurred by such restrictions. We conducted comprehensive offline evaluations to demonstrate the performance of the proposed strategy. Results using two real-world search data sets show that our pricing strategy outperforms related work. For future work, we would like to find more effective ways to aggregate optimal prices obtained from different search results, try different value models, and extend the work to other contexts. We also plan to conduct online experiments to further demonstrate the effectiveness of our pricing model.







\newpage
\bibliographystyle{acm}
\bibliography{paper}
\end{document}

\end{document}